\documentclass[runningheads]{llncs}
\usepackage[T1]{fontenc}
\usepackage{amsmath}
\usepackage{amssymb}
\usepackage{tablefootnote}
\usepackage{booktabs}
\usepackage{subfig}
\usepackage{multirow}
\usepackage{mathtools}
\usepackage{longtable}
\usepackage{graphicx}
\usepackage{hyperref}
\usepackage[capitalise]{cleveref}
\usepackage[linesnumbered,ruled]{algorithm2e}
\usepackage{setspace}
\usepackage{float}
\usepackage{multicol}
\usepackage{stfloats}
\usepackage{orcidlink} 
\hypersetup{hidelinks}

\usepackage{color}
\usepackage[misc]{ifsym} 
\begin{document}
\title{CNMBI: Determining the Number of Clusters Using Center Pairwise Matching and Boundary Filtering}
\titlerunning{CNMBI}
\author{Ruilin Zhang\textsuperscript{1}\orcidlink{0000-0002-4818-9282} \and
Haiyang Zheng\textsuperscript{1}\orcidlink{0000-0001-8733-9696} \and
Hongpeng Wang\textsuperscript{1(\Letter),2}\orcidlink{0000-0001-8108-2674}}
\authorrunning{Ruilin Zhang et al.}
%
\institute{Harbin Institute of Technology, Shenzhen, Shenzhen, China \\ \email{wanghp@hit.edu.cn} \and
 Peng Cheng Laboratory, Shenzhen, China \\}

\maketitle              
\begin{abstract}
	One of the main challenges in data mining is choosing the optimal number of clusters without prior information. Notably, existing methods are usually in the philosophy of cluster validation and hence have underlying assumptions on data distribution, which prevents their application to complex data such as large-scale images and high-dimensional data from the real world. In this regard, we propose an approach named CNMBI. Leveraging the distribution information inherent in the data space, we map the target task as a dynamic comparison process between cluster centers regarding positional behavior, without relying on the complete clustering results and designing the complex validity index as before. Bipartite graph theory is then employed to efficiently model this process. Additionally, we find that different samples have different confidence levels and thereby actively remove low-confidence ones, which is, for the first time to our knowledge, considered in cluster number determination. CNMBI is robust and allows for more flexibility in the dimension and shape of the target data (e.g., CIFAR-10 and STL-10). Extensive comparison studies with state-of-the-art competitors on various challenging datasets demonstrate the superiority of our method.

	\keywords{Number of clusters  \and Cluster center \and Complex data \and Pairwise matching \and Boundary filtering.}
\end{abstract}

\section{Introduction}
The automatic determination of parameters has become one of the main considerations for the popularity of a learning solution in machine learning or pattern recognition\cite{Contrastive}. Clustering, as an essential analysis tool, endeavors to discover underlying structure from unlabeled data, ensuring that samples\footnote{Data points, objects, and samples are used exchangeably in this paper.} in the same group have high homogeneity and that different groups have the maximum difference\cite{Fast-LDP-MST}. It should be noted that, typically, most clustering approaches suffer from the limitation that the number of clusters has to be fed by a human user, which is fundamental to obtaining a good data partition. However, the number of clusters in practice is usually unknown, as Salvador stated in \cite{Salvador}, which often results in suboptimal clustering results. Determining the number of clusters automatically can enhance the potential of clustering methods, while also being instructive for other learning tasks, like counting the amount of farmland in remote sensing images and setting the type of anchors in object detection\cite{Index-Based-Solutions-for-Efficient-Density-Peak-Clustering}. Although several solutions\cite{LCCV,DPC-AHS,VCIM,OCVD} have recently been proposed to estimate the number of clusters, they either perform unstably or are challenging to employ in practice when handling complex data like large-scale images and high-dimensional data from the real world. Technically, existing methods are generally based on the clustering validation philosophy, which uses the quality of clustering results to reveal the number of clusters, i.e., evaluating a clustering validity index (named CVI) over the target data and optimizing it as a function of the number of clusters. With a clear semantic and mathematical background, this paradigm has gained significant attention\cite{CNAK,Determination-of-the-Optimal-Number-of-Clusters:-A-Fuzzy-Set-Based-Method}, mainly covering the design of cluster validity index (CVI)\cite{RDPC-DSSA,DPC-AHS,CNAK} and the embedding of clustering schemes\cite{LCCV,OCVD,Index-Based-Solutions-for-Efficient-Density-Peak-Clustering}.

Nevertheless, the unsupervised context limits existing methods to design CVI based solely on the subjective morphological characteristics of final clusters, such as compactness and separation, which inevitably imposes potential assumptions on the target data distribution. Moreover, the clustering results of each round during iterative optimization are sometimes not discriminative and instructive, as in the case of complex data, due to the inherent limitations of the chosen clustering scheme. Commonly used K-means, for instance, favors spherical or well-separated clusters, so the consistency of the optimal number of clusters, the best clustering result, and the best score, i.e., the philosophy underlying such methods, could be invalidated by some challenging data. Additionally, we observe that existing work generally focuses on the methodological level while ignoring the significance of the sample to the target task in practice.

With this in mind, we bridge these bottlenecks by proposing a novel algorithm (CNMBI). Our work demonstrates the following contributions:

1) We propose a scheme for determining the number of clusters based on center pairwise matching. Instead of previous approaches resorting to CVI scoring and complete clustering results, we use inherent distribution information in the data space to model cluster number determination as a dynamic contrastive process of cluster centers regarding positional behavior.

2) We develop a boundary filtering method to remove low-confidence samples that interfere with the cluster number determination task, allowing our method to have greater flexibility and robustness in terms of data shape and dimension. To the best of our knowledge, this aspect has not been considered.

3) Our method outperforms the state-of-the-art competitors on diverse benchmark datasets (e.g., STL-10, MNIST, and CIFAR-10), and we are the first to report the results of a cluster number determination method on the challenging dataset such as STL-10 and CIFAR-10. The in-depth discussions of boundary information, and robustness jointly demonstrate the superiority of CNMBI.


\section{Related Work}
For the goal of determining the number of clusters, the philosophy behind most methods is that the actual number of clusters, the best clustering result, and the best clustering validity score are consistent. In this context, designing an effective validity index is essential because the optimal number of clusters needs to be reflected by evaluating the clustering results.  Conventionally, most CVIs take distance information of resulting clusters as the main focus, considering the intra-cluster distance (compactness), inter-cluster distance (separation), or the statistical variants of both to describe clustering results, such as DBI\cite{DBI}, DUNN\cite{Dunn}, CH\cite{CH-index}. Combined with partition-based clustering, these indexes can suggest the desired number of clusters  under the spherical or well-separated data. Additionally, some work attempts to construct various curves formed by intra-cluster variance, using characteristics such as "maximum curvature"\cite{Curvature}, "knee of the curve"\cite{Salvador}, "elbow of the curve", "max magnitude"\cite{evaluation-graph-gap}, or "sharp jump"\cite{Jump} to visually locate the number of clusters. 

Recently, in addition to the work above relying on distance information, several efforts have leveraged density information to define CVI or update clustering schemes. For instance, \cite{OCVD} uses cluster density to evaluate the clustering results. Likewise, LCCV \cite{LCCV} selects a group of highly representative density cores by density information to simplify the silhouette coefficient and employs agglomerative hierarchical clustering as the assignment scheme. Besides, there are some promising density clustering methods that work with CVI, one of the typical paradigms is density peaks clustering (DPC)\cite{DPC}. As in the case of \cite{RDPC-DSSA}, which simplifies the adapted silhouette coefficient by considering density centers from DPC as representative. Another example is DPC-AHS\cite{DPC-AHS}, which takes the skew of distribution within the cluster as the validity index by the advantages of density clustering for arbitrary-shaped data. Despite their simplicity and ease of implementation, these methods are not practical as they usually perform unstably on some complex data such as large-scale images and high-dimensional data from the real world, as discussed in the introduction. In contrast, our method addresses the above limitations.


\section{Proposed Method: CNMBI}

\subsection{Center-Pairwise Matching}

Cluster number determination is essentially a dynamic optimization process, so one potential solution is to improve identification performance by leveraging valuable pattern information. However, in such an unsupervised context, significant external information, such as data labels, is unavailable. It should be noted that the distribution information from the data space is also instructive, such as the cluster center generated by density or geometric information, which is closely related to the number of clusters and, more importantly, is the most representative in the data space.

With this in mind, we find a fundamental insight that the distance between the density centers (using density information) and the mean centers (relying geometric information) is closest when and only when the number of clusters is actual, which is enlightening for cluster number determination. As shown in \cref{Fig-3cases}, the distances between the mean centers and the density centers are small and the positions almost overlap in the case of the number of clusters $k=3$. On the contrary, there is a significant distance between the two in cases smaller than the actual number of clusters (\cref{Fig-3cases}(a), $k$=2) or larger (\cref{Fig-3cases}(c), $k$=5).To further discuss the principle behind this observation above, we have some formal definitions as follows. 
\begin{figure}[h]
	\centering
	\subfloat[case 1: $k=2<K=3$.]
	{ {\includegraphics[width=0.32\linewidth]{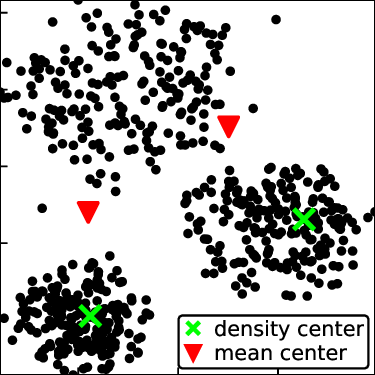}}}
	\subfloat[case 2: $k=K=3$.]
	{	 {\includegraphics[width=0.32\linewidth]{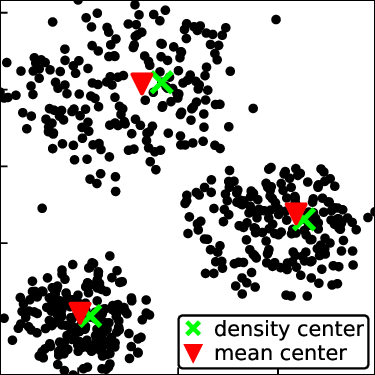}}} 
	\subfloat[case 3: $k=5>K=3$.]
	{	 {\includegraphics[width=0.32\linewidth]{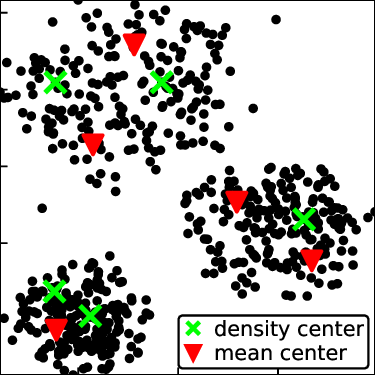}}}
	\setlength{\abovecaptionskip}{1.5mm} 
	\caption{Density center versus mean center for different cluster number settings. }
	\label{Fig-3cases}
\end{figure}

Given a dataset $X\in \mathbb{R}^d$ containing $K$ actual clusters of $n$ samples. Let $k$ denote the number of clusters currently under consideration, the mean center is obtained by
\begin{equation} \label{EQ1}
\begin{aligned}
S_{\mu }^{(k)}=\left\{ \left. \frac{\sum\limits_{x\in C_{\mu }^{(i)}}{x}}{\left| C_{\mu }^{(i)} \right|} \right|1\le i\le k \right\}
\end{aligned}
\end{equation}
where $S_{\mu }^{(k)}$ represents a set containing $k$ mean centers and $C_{\mu }^{(i)}$ denotes a cluster generated by partitioning-based clustering (K-means is used here), i.e. $X=\{C_{\mu }^{(1)} ,...,C_{\mu }^{(k)}\}$. In particular, \cref{EQ1}  illustrates that the mean centers are geometric centroids of the resulting clusters and their positions are unstable. As shown in \cref{Fig-3cases}, in the case of $k$=2 or $k=5$, the mean centers are generally distributed at the edges or outside of the actual clusters in $X$, due to cluster splitting or cluster merging behaviors.

In addition, the density centers are described as those objects surrounded by neighbors with lower density and have a relatively large distance from any data points with higher density, as discussed in DPC. To this aim, the two quantities: local density $\rho$ and $\delta$ distance are used to measure the possibility of each object becoming a cluster center from the density view.
\begin{equation} \label{EQ2}
\begin{aligned}
{{\rho }_{i}}=\sum\limits_{{{x}_{j}}\in X\backslash \{{{x}_{i}}\}}{\exp (-{{(\frac{{{d}_{ij}}}{dc})}^{2}})}
\end{aligned}
\end{equation}
\begin{equation} \label{EQ3}
{{\delta }_{i}}=\left\{ \begin{aligned}
& \min \{\left. dist({{x}_{i}},{{x}_{j}}) \right|{{\rho }_{i}}<{{\rho }_{j}},\ \ {{x}_{i}}\in X\}\ \ \ if\ \exists {{x}_{j}}\in X,\ \ {{\rho }_{i}}<\rho_j  \\ 
& \max \{\left. dist({{x}_{i}},{{x}_{j}}) \right|{{x}_{j}}\in X\}\ \ otherwise \\ 
\end{aligned} \right.
\end{equation}
Where $dc$ acts as the sampling radius of density calculation, it is determined by sorting $n^2$ distances in X from ascending order and selecting a number ranked at $2\%$. With the product ${{\gamma }_{i}}={{\rho }_{i}}\times{{\delta }_{i}}$, the greater the score ${{\gamma }_{i}}$, the more likely the ${{x}_{i}}$ is considered as a cluster center.

\begin{definition}\textbf{Density Center Set}  \label{Density Center Set}
	Given the current cluster number $k$, the density center set $S_{\gamma }^{(k)}$, including $k$ centers determined by DPC, is given by
\end{definition}
\begin{equation} \label{EQ4}
\begin{aligned}
S_{\gamma}^{(k)}=\{{{y}_{i}}\left| {{\gamma }_{i}}\ge {{{\bar{\gamma }}}_{k}} \right.,\ \ 1\le i\le n\}
\end{aligned}
\end{equation}
where ${{\bar{\gamma }}_{k}}$ is the $k$-th largest gamma score by sorting objects in $\gamma$ descending order. Note that the cardinality of both sets $S_{\gamma }^{(k)}$ and $S_{\mu }^{(k)}$ is $k$. According to the above formula, we can also conclude that the density center tends to be the most core region of each actual cluster in $X$, and the position is stable because the density information directly determines it.


Inspired by above observation, we draw a promising insight favoring the determination of the number of clusters: "the difference between the mean center and the density center in terms of location distribution is minimal when and only when the actual number of clusters is given, quantified as the sum of the $k$ pairwise distances formed is minimal compared to the other cases".


To apply this insight, we propose to model the relationship between the density centers and mean centers using graph theory to quantify the actual distance between them. The solution regarding the optimal number of clusters can be transformed into the searching the "Center Pairwise Matching" below.

We formally treat Mean Center Set $S_\mu$ (following \cref{EQ1}) and Density Center Set $S_\gamma$ (following \cref{EQ4}) as two graph node sets, and the elements of which are regarded as individual nodes. After that,  Center Bipartite Graph is defined as.
\begin{definition}\textbf{Center Bipartite Graph}\label{Center Bipartite Graph}
	Let $S_{\gamma }^{(k)}$ and $S_{\mu }^{(k)}$ be the set of mean center and density center at number of clusters $k$, respectively. We then define the center bipartite graph $G_{\mu ,\gamma }^{(k)}=({{V}^{(k)}},{{E}^{(k)}})$ such that it should obey the properties:
\end{definition}
(1) ${{V}^{(k)}}=\{S_{\mu }^{(k)}\cup S_{\gamma }^{(k)}\}$ and $S_{\mu }^{(k)}\cap S_{\gamma }^{(k)}=\emptyset$; (2) for each node $x_i\in S_{\mu }^{(k)}(or\ x_j\in \ S_{\gamma }^{(k)})$, there must be $k$ edges between it and another node subset; (3) for each edge $e(x_i,x_j)\in {{E}^{(k)}}$, that means $x_j\in S_{\mu }^{(k)}$ as well as $x_j\in S_{\gamma }^{(k)}$.


\begin{definition}\textbf{Center Complete Matching}\label{Center Complete Matching}
	Let $CM$ represent a complete matching on $G_{\mu ,\gamma }^{(k)}=({{V}^{(k)}},{{E}^{(k)}})$, a subset of size $k$ of the edge set $E^{k}$, which must satisfy the following conditions:
\end{definition}
(1) For any two edges $e({{x}_{i}},{{x}_{j}}),\ e({{x}_{p}},{{x}_{q}})\in CM$, ${{x}_{i}}\cap {{x}_{p}}=\emptyset$ and ${{x}_{j}}\cap {{x}_{q}}=\emptyset$ should be hold; (2) $\left| CM \right|=\left| S_{\gamma }^{(k)} \right|=\left| S_{\mu }^{(k)} \right|$.
Note that for the graph $G_{\mu ,\gamma }^{(k)}$, there are $k!$ complete matching cases: $CM\in {{P}^{(k)}}=\{{{M}_{1}},{{M}_{2}},...,{{M}_{k!}}\}$.

\begin{definition}\textbf{Center Similarity Matrix}\label{Center Similarity Matrix}
	For any complete matching $PM\in {{P}^{(k)}}$ of the center bipartite graph $G_{\mu ,\gamma }^{(k)}$, the corresponding "center similarity matrix" with sparse property, denoted by $CD$, is defined as:
\end{definition}
\begin{equation} \label{EQ5}
CD_{[p,q]}^{(k)}=\left\{ \begin{aligned}
& \| {{x}_{p}}-{{x}_{q}}\|_{2}^{2},\ \ if\ e({{x}_{p}},{{x}_{q}})\in CM \\ 
& 0\ \ otherwise \\ 
\end{aligned} \right.
\end{equation} 


\begin{definition}\textbf{Center Pairwise Loss}\label{Center Pairwise Loss}
	Given a matching scheme $CM$, the expectation of center similarity matrix $CD$, called "center pairwise loss", is used to quantify the center pairwise relationships in this matching $CM$
\end{definition}
\begin{equation} \label{EQ6}
\begin{aligned}
L(CM)=\frac{1}{k}{{I}_{1*k}}({{CD}^{(k)}})*I_{1*k}^{T}
\end{aligned}
\end{equation}
where ${{I}_{1*k}}=(1,1,...,1)_{_{1*k}}^{T}$ is an auxiliary matrix for vectorization.
\begin{definition}\textbf{Center Pairwise Matching}\label{Center Pairwise Matching}
	Given the number of clusters k, there exist Ki complete matches between the mean center and the density center. We use the "center pairwise matching" as the matching scheme under the current number of clusters, expressed as follows.
\end{definition}
\begin{equation} \label{EQ7}
\begin{aligned}
CP{{M}^{(k)}}=\left\{ \left. \underset{e({{x}_{p}},{{x}_{q}})\in PM}{\mathop{\bigcup}}\,e({{x}_{p}},{{x}_{q}}) \right|\arg \underset{PM\in {{P}^{(k)}}}{\mathop{\min }}\,(L(PM)) \right\}
\end{aligned}
\end{equation}
It is worth mentioning that the loss value of the center pairwise matching is considered as the unique feature value of the current number of clusters during cluster number optimization. To this end, the optimal number of clusters should be projected as the global minimum center loss, thus we define

\begin{equation} \label{EQ8}
\begin{aligned}
{K^*}=\underset{k}{\mathop{\arg }}\,\min (L(CP{{M}^{(k)}}))
\end{aligned}
\end{equation}

\subsection{Low Confidence Sample Filtering}
Inspired by self-paced learning in which samples have different confidence levels for a given learning task, we note that some extreme points in the data space, especially at the edges of clusters, between clusters, or away from clusters, may disturb the cluster number identification. These points do not reveal the core structure of the clusters as well as tend to incur cluster merging, weaken the independence of clusters, eventually breaking the consistency between "the actual number of clusters," "the best clustering result," and "the best score," which is the main factor that current work is hard to deal with complex data. It is noteworthy that the existing methods are not aware of this fact. To this aim, we propose a boundary filtering method based on space vector decomposition, aiming to enhance the robustness of our method against complex data.

We find that the boundary object and its neighbors have strong unbalance in the projection subspace, forming the typical skew distribution in the direction of the base vector. By contrast, the neighbors of the core object show significant symmetry in the direction of the basis vector\cite{BPSVD}. With this in mind, we first treat the close relationship between target object and its neighbors as independent space vectors, described as follows. 

Let ${{x}_{i}}$ be any object in data set $X=\{{{x}_{1}},{{x}_{2}},...,{{x}_{n}}\}\in {\mathbb{R}^{m\times n}}$. According to the space vector decomposition theorem, there exists a unique ordered sequence of real number $\{{{\lambda }_{1}},{{\lambda }_{2}},...,{{\lambda }_{m}}\}$ such that ${{x}_{i}}={{\lambda }_{1}}{{e}_{1}}+{{\lambda }_{2}}{{e}_{2}}+...+{{\lambda }_{m}}{{e}_{m}}$, where $\{{{e}_{1}},{{e}_{2}},...,{{e}_{m}}\}$ denotes the basis vectors for the data space ${\mathbb{R}^{m}}$. For $ {\forall}$ ${{x}_{i}},{{x}_{j}}\in X$, the neighborhood vector formed by the two in ${\mathbb{R}^{m}}$ is given by
\begin{equation} \label{EQ9}
\begin{aligned}
{{h}_{i,j}}=({{\lambda }_{i,1}}-{{\lambda }_{j,1}},...,\ {{\lambda }_{i,d}}-{{\lambda }_{j,d}},...,{{\lambda }_{i,m}}-{{\lambda }_{j,m}}){{({{e}_{1}},...,{{e}_{m}})}^{T}}\ 1\le i,j\le n
\end{aligned}
\end{equation}
where ${{\lambda }_{i,d}}-{{\lambda }_{j,d}}$ denotes the projection of ${{h}_{i,j}}$in the direction of basis vector ${{e}_{d}}$.

Notably, the neighbors of ${{x}_{i}}$ refers to objects within the open neighborhood of $x_i$ with the radius as $dc$, i.e.,${{N}_{dc}}({{x}_{i}})$. As the typical operation to describe spatial position relations, the local representative vector ${\mathcal{H}_{i}}$ is customized as the vector addition regarding ${{x}_{i}}$ and its neighbors.
\begin{equation} \label{EQ10}
{\mathcal{H}_{i}}= 
\sum\limits_{{{x}_{j}}\in {{N}_{dc}}({{x}_{i}})}{({{h}_{i,j}})}\text{=}{{(\sum\limits_{{x}_{j}}{({{\lambda }_{i,1}}-{{\lambda }_{j,1}})},...,\sum\limits_{{x}_{j}}{({{\lambda }_{i,m}}-{{\lambda }_{j,m}}))(}{{e}_{1}},...,{{e}_{m}})}^{T}} 
\end{equation}
where $\sum{({{\lambda }_{i,d}}-{{\lambda }_{j,d}})}$ symbolizes the projection of the neighborhood vectors of $x_{i}$ in the direction of the basis vector ${{e}_{d}}$.  In this regard, we make the following derivation about projection of the neighborhood vector onto the feature space:

\begin{theorem}
	Given a one-dimensional feature space ${{v}_{d}}(1\le d\le m)$ and the neighborhood vector ${{h}_{i,j}}$, then the projection of the neighborhood vector ${{h}_{i,j}}$ onto it is ${{e}_{d}}{{(e_{d}^{\text{T}}{{e}_{d}})}^{-1}}e_{d}^{\text{T}}{{h}_{ij}}$, where ${{e}_{d}}$ is the basis vector of ${{v}_{d}}$.
	\label{thm-2}
\end{theorem}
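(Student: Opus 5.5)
This is the standard orthogonal projection onto a one-dimensional subspace. I would prove it as a least-squares characterization. Let $v_d=\mathrm{span}\{e_d\}$ with $e_d\neq 0$, so $e_d^{T}e_d=\|e_d\|_2^2>0$ is an invertible $1\times 1$ matrix and the expression in Theorem~\ref{thm-2} is well defined. The projection of $h_{i,j}$ onto $v_d$ is the unique point $p\in v_d$ such that $h_{i,j}-p\perp v_d$; equivalently, it is the minimizer of $\|h_{i,j}-p\|_2$ over $p\in v_d$.

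\textbf{Step 1 (parametrize the subspace).} Every element of $v_d$ has the form $p=e_d t$ for some scalar $t\in\mathbb{R}$. The problem therefore reduces to finding $t$.

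\textbf{Step 2 (orthogonality condition).} Impose $e_d^{T}(h_{i,j}-e_d t)=0$. This gives the normal equation $(e_d^{T}e_d)\,t=e_d^{T}h_{i,j}$. Because $e_d^{T}e_d>0$, it has the unique solution $t=(e_d^{T}e_d)^{-1}e_d^{T}h_{i,j}$. Substituting back yields $p=e_d(e_d^{T}e_d)^{-1}e_d^{T}h_{i,j}$, which is the claimed formula.

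\textbf{Step 3 (justify that this is the projection).} For completeness, I would check that $P=e_d(e_d^{T}e_d)^{-1}e_d^{T}$ is a genuine orthogonal projector: it is symmetric, and $P^2=P$ follows by direct multiplication. I would also show minimality via the Pythagorean identity. For any $q=e_d s$,
\[
\|h_{i,j}-q\|_2^2=\|h_{i,j}-p\|_2^2+\|p-q\|_2^2 ,
\]
which holds because $h_{i,j}-p\perp p-q$. Hence $p$ is the unique closest point.

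Finally, I would tie the result back to Eq.~\eqref{EQ9}. When the basis is orthonormal, $e_d^{T}e_d=1$ and $e_d^{T}h_{i,j}=\lambda_{i,d}-\lambda_{j,d}$, so the projection is $(\lambda_{i,d}-\lambda_{j,d})e_d$, matching the coordinate interpretation stated after Eq.~\eqref{EQ9}. There is no real obstacle here. The only subtle point is that the coordinate identification requires an orthonormal basis, since for a general basis the coefficient $\lambda_{i,d}-\lambda_{j,d}$ is not the orthogonal projection. I would state this assumption explicitly rather than rely on it implicitly.
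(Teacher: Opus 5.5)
Your proof is correct and is essentially the paper's argument. The paper requires the residual to lie in $\mathrm{Null}(A^{T})$ for a general $p$-dimensional subspace, obtains $A^{T}h_{i,j}-A^{T}Ay=0$, and then specializes to $A=e_d$, whereas you write the same normal equation directly in the one-dimensional case. Your additional checks fill in points the paper leaves implicit: that $e_d\neq 0$ makes $e_d^{T}e_d$ invertible, that $P$ is symmetric and idempotent, and the Pythagorean minimality argument. Your closing remark matches the orthonormal-basis specialization the paper makes in \cref{EQ14}, although orthogonality of the basis alone already yields the projection $(\lambda_{i,d}-\lambda_{j,d})e_d$.
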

\begin{proof}
	Assuming $\mathcal{S}$ is a subspace of $\mathbb{R}^m$ ($\mathcal{S} \in \mathbb{R}^p, s.t. p \le m $) and ${\{e_1^{'}},{e_2^{'}},{e_3^{'}},{e_4^{'}},...,{e_p^{'}}\}$ is a basis, then for any object  $a\in \mathcal{S}$, there is only $a={y_1{e_1}^{'}}+{y_2{e_2}^{'}}+{y_3{e_3}^{'}}+{y_4{e_4}^{'}}+...+{y_p{e_p}^{'}}$. Let matrix $A=[{e_1^{'}},{e_2^{'}},{e_3^{'}},{e_4^{'}},...,{e_p^{'}}]_{m\times p}$, then $a=Ay,y\in \mathbb{R}^p$.
	
	According to the above statement, let $Proj_sh_{i,j}$ denote the projection of neighborhood vector $h_{i,j}$ (composed of  ${{x}_{i}}\in {\mathbb{R}^{m}}$ and its neighbor ${{x}_{j}}\in {\mathbb{R}^{m}}$) on subspace $\mathcal{S}$, then $Proj_sh_{i,j}\in \mathcal{S}$ with
	\begin{equation}\label{EQ11} 
	Proj_sh_{i,j}=Ay,y\in R^p
	\end{equation}
	Based on the fundamental theory of linear algebra, the neighborhood vector $h_{i,j}$ is such that $h_{i,j}=Proj_{s^\bot}h_{i,j}+Proj_sh_{i,j}$, where $Proj_{s^\bot}h_{i,j}$ denote the projection of $h_{i,j}$ on the orthogonal complement of subspace $S$, which is equal to the null space of $A^T$. We have:
	\begin{align}\label{EQ12}
	Proj_{s^\bot}h_{i,j}&=h_{i,j}-Proj_sh_{i,j}\in Null(A^T) 
	\end{align}	
	
	Left-multiplying both sides by $A^{T}$ and simplifying, we obtain $A^{T} h_{i, j}-A^{T} A y=0$, then $y=\left(A^{T} A\right)^{-1} A^{T} h_{i, j}$. Substituting the expression for $y$ in the $Proj_sh_{i,j}$, we have $Proj_sh_{i,j}=Ay=A(A^TA)^{-1}A^Th_{i,j}$. Now, \cref{EQ11} can be rewritten as
	\begin{align}\label{EQ13}
	Proj_sh_{i,j}=Ay=A(A^TA)^{-1}A^T(x_i-x_j)
	\end{align}
	
	
	Without loss of generality, we assume that the subspace $\mathcal{S}$ is instantiated as a one-dimensional space $s_d$, thus the matrix $A$ could be transformed into a column vector $e_d$, i.e. the basis vector of this one-dimensional space. Consequently, projection is formalized as $e_d(e_d^Te_d)^{-1}e_d^Th_{ij}$. The theorem is proved. $\square$	
\end{proof}


With the help of orthogonality, we utilize a standard orthogonal basis ${{\{{{e}_{d}}\}}_{1\le d\le m}}$ ,($\ \text{s.t.} \langle e_{i}^{{}},{{e}_{j}}\text{=0}\rangle \cap e_{i}^{\text{T}}{{e}_{i}}\text{=}1$),to instantiate matrix $A$ in \cref{EQ13}. After that, the projections of the neighborhood vector $h_{i,j}$ composed of $x_i$ and $x_j$ on the directions of the $m$ orthogonal basis vectors can be expressed as:
\begin{equation}  \label{EQ14}
Proj_{v_d}h_{i,j}=(e_d(e_d^Te_d)^{-1}e_d^Th_{ij})=(e_de_d^Tx_i-e_de_d^Tx_j)=(x_{id}-x_{jd}),1\le d\le m
\end{equation}
Thus, local representative vector from \cref{EQ10} is formalized as follows:
\begin{align}\label{EQ15} 
\mathcal{H}_i & =(\sum_{j}{Proj_{v_1}h_{i,j}},\sum_{j}{Proj_{v_2}h_{i,j},...,\sum_{j}{Proj_{v_m}h_{i,j}}})(e_1,e_2,e_3,...,e_m)^T \nonumber \\
&=(\sum_{j}{e_1e_1^Tx_i-e_1e_1^Tx_j}, \sum_{j}{e_2e_2^Tx_i-e_2e_2^Tx_j},..., \sum_{j}{e_me_m^Tx_i-e_me_m^Tx_j})\left[ \begin{matrix}
1 & 0 & ... & 0  \\
0 & 1 & 0 & ...  \\
... & 0 & 1 & 0  \\
0 & ... & 0 & 1  \\
\end{matrix} \right] \nonumber \\ 
& =(\sum\limits_{j}{{{x}_{i1}}-{{x}_{j1}}},\ \sum\limits_{j}{{{x}_{i2}}-{{x}_{j2}}},...,\ \sum\limits_{j}{{{x}_{im}}-{{x}_{jm}}}) \nonumber \\
\end{align}

Mathematically, the length of the local representative vector $\mathcal H$ formed by the boundary object is significantly longer than that of the core object. in this case, the P-norm is preferred here.
\begin{definition}\textbf{Boundary degree}
	Considering the computational cost, the Boundary degree ${{\varphi}}$ is defined by \cref{EQ16} with $p$=1.
\end{definition}
\begin{equation} \label{EQ16} 
\begin{aligned}
\varphi_i=	||{{\mathcal{H}}_{i}}|{{|}_{p}}={{(\sum\limits_{d}{{{(|\sum\limits_{j}{{{x}_{id}}}-{{x}_{jd}}|)}^{p}}})}^{\frac{1}{p}}}\ \ \ \ s.t.\ 1\le d\ \le m,\ {{x}_{j}}\in {{N}_{dc}}({{x}_{i}})
\end{aligned}
\end{equation}
where the role of boundary degree ${{\varphi}}$ is to determine whether ${x_i}$ is a boundary point; a larger value suggests that $x_i$ should be removed first.

After that, the dataset $X$ can be evolved into a core subset:
\begin{equation}\label{EQ17}
{{X}^{\prime}}=\{{{x}_{i}}\in X: {\varphi_i} \le A_{\varphi}[\left\lfloor n*\lambda \right\rfloor ]\}
\end{equation}
where $\lambda$ denotes the proportion of low confidence sample in $X$ and $A$ is a descending list of $\varphi$. $\lambda$ is empirically set to 10\%. Subsequently, the density centers and mean centers are derived from $X^{\prime}$ by CNMBI, which finally predicts the number of clusters by the proposed center pairwise matching. As summarized in Algorithm 1.

\begin{algorithm}[h]
	\DontPrintSemicolon
	\SetAlgoLined
	\KwIn {Input data: $X$}
	\KwOut {the optimal number $K^*$}
	Calculate Boundary degree of each sample in $X$, $\varphi_i$ using \cref{EQ16} ;\\
	Remove the low-confidence samples to form the core set $X^{\prime}$ of $X$ using \cref{EQ17}; \\
	\For{$k={{\hat{k}}_{\min }}:{{\hat{k}}_{\max }}$}{ 
		Find the density center set $S_{\gamma }^{(k)}$ using  \cref{EQ4} \\
		Find the mean center set $S_{\mu }^{(k)}$ using \cref{EQ1} \\
		Construct the Center Bipartite Graph $G_{\mu ,\gamma }^{(k)}$ using \cref{Center Bipartite Graph}\\
		Search the Center Complete Matching $CM$ and calculate the Center Pairwise Loss $L(CM)$ using \cref{Center Complete Matching} and \cref{EQ7} \\
		Determine the Pairwise Matching $CPM^{(k)}$ under the $k$ using \cref{EQ7}
	} 
	$K^*={\mathop{\arg }}\,\min (L(CP{{M}^{(k)}})), {k \in [{\hat{k}}_{\min },{\hat{k}}_{\max }]}$ \\
	\KwResult{$K^*$ }
	\caption{CNMBI}
	\label{Algorithm2}
\end{algorithm}



\section{Experiments}

\textbf{Data Sets.} To demonstrate the effectiveness of the proposed CNMBI, a variety of data scenarios are covered, the information of which is presented in \cref{Table1}. The 8 synthetic datasets are generated from Shape sets of the Clustering basic benchmark\footnote{http://cs.joensuu.fi/sipu/datasets/}, and 3 high-dimensional datasets are obtained from the UCI \cite{UCIrepository}. The image datasets (CIFAR-10\footnote{http://www.cs.toronto.edu/ kriz/cifar.html}, STL-10\footnote{https://cs.stanford.edu/ acoates/stl10/}, MNIST\footnote{http://yann.lecun.com/exdb/mnist/}, Pendigits, Optical Recognition, and Pointing\footnote{http://www-prima.inrialpes.fr/Pointing04}) are considered. \textbf{Evaluation Metrics.} We record the optimal number of clusters (NC) and calculate the accuracy metric (ACC: i.e., the proportion of successful estimates in 50 runs). \textbf{Experimental Settings.} We set the number of clusters $k$ to the acknowledged range of 2 to $\sqrt n$, performing 50 runs. We compare the proposed CNMBI with 4 state-of-the-art approaches, LCCV\cite{LCCV}, VCIM\cite{VCIM}, DPC-AHS\cite{DPC-AHS}, CNAK\cite{CNAK} and well-known validity index SC\cite{Silhouettes}.

\begin{table}[htp]
	\caption{Basic information about the studied data sets.}
	\label{Table1}
	\renewcommand\tabcolsep{3pt}
	\renewcommand\arraystretch{0.9}
	\footnotesize
	\centerline{
		\begin{tabular}{cccccc}
			\toprule
			Name                                        & Size  & Dimension         & Class      & Description                                      \\
			\midrule
			Unbalanced                                     & 1500  & 2         & 3          & Multi-density                         \\
			Flame                                       & 240   & 2         & 2          & Overlapping                                      \\
			Jain                                        & 373   & 2         & 2          & Crescent, Multi-density                          \\
			S2                                          & 5000  & 2         & 15         & Sub-cluster,Overlapping                          \\
			A1                                       & 3000  & 2         & 20         & Small cluster                                      \\
			Asymmetric                                  & 1000  & 2         & 5          & Unbalanced, noisy                                 \\
			Heterogeneous                                        & 400   & 2         & 3          & Heterogeneous geometric\\
			Multi-objective                                       & 1000  & 2         & 4          & elongated, circle-like,   multi-objective        \\
			\midrule
			REUTERS-10K         & 10000 & 12000     & 4          & Word, News, Text,                                \\
			Covid-19            & 171   & 53        & 3          & Covid-19                                         \\
			RNA-seq             & 801   & 20531     & 5          & Gene expression, Nonlinear                       \\
			\midrule
			MNIST               & 10000 & 28*28     & 10         & OCR, high-dimensional                            \\
			Pendigits           & 10992 & 4*4       & 10         & Handwritten Digits                               \\
			Optical Recognition & 5620  & 64        & 10         & OCR, Handwritten Digits                          \\
			Pointing Data       & 1395  & 384*284*3 & 15         & Posture recognition    \\
			STL-10              & 13000 & 3*96*96   & 10         & Complex Contextual  \\
			CIFAR-10            & 10000 & 3*32*32   & 10         & Complex Contextual, Large-Scale \\
			\bottomrule
		\end{tabular}
	}
\end{table}


\subsection{Quantitative Study}
\cref{Table2} summarizes the experimental results of the quantitative study, from which we can draw some conclusions. First, our algorithm correctly identified 13 out of 17 datasets in 50 runs, 8 more than the second place, achieving top-1 performance. Second, our method is suitable for data containing arbitrary-shaped clusters. For instance, the clusters in datasets "S2"(\cref{Fig3-Synthetic}(a)), "A1"(\cref{Fig3-Synthetic}(b)), "Flame"(\cref{Fig3-Synthetic}(d)), and "Asymmetric"(\cref{Fig3-Synthetic}(f)) almost overlap visually. Whereas datasets "Unbalanced"(\cref{Fig3-Synthetic}(c)) and "Jain"(\cref{Fig3-Synthetic}(e)) are examples of multi-density, showing manifold and unbalanced, respectively.  As a result, only our method accurately identified all of them.  Besides, \cref{Table2} also reports the more challenging datasets from real world. For example, REUTERS-10K is a typical text type; Covid-19 records biochemical indicators of 3 types of COVID-19 with significant dimensional-difference; RNA-seq records a surprising 20,000 dimensions of gene transcription data. The six large-scale image datasets further validate the effectiveness of our algorithm CNMBI, especially for complex CIFAR-10 and STL-10, where only CNMBI achieves the correct cluster number estimation. The above satisfactory performance is mainly thanks to the joint modeling of the center pairwise matching scheme and the boundary filtering principle, the former ensuring the flexibility of our method in terms of data dimension and shape, while the latter enhancing the robustness of CNMBI when handling complex data. It is also worth mentioning that the scores under the ACC metric indicate that our method is more stable than other methods over multiple runs. In particular, \cref{Fig-Loss} visualizes the optimization process of part of the data set.





\subsection{Ablation Study}

\cref{Table4} reports the experimental results, from which we can draw some conclusions: 1) Method-1 removes the boundary filtering strategy compared to our method, resulting in a significant decrease in $NC$ score from a perfect score of 1 to 0.2 on the Pointing Dataset, which demonstrates that this strategy plays an important role in improving robustness; 2) The results of Method-5 indicate that our method has good scalability, i.e., cluster number identification can also be achieved by comparing the mean centers generated twice, although it is not as stable as our original method, as in the case of $NC$ score.

\begin{table}[H]
	\caption{Cluster number prediction on challenging datasets over 50 runs. }
	\label{Table2}
	\renewcommand\tabcolsep{5pt}
	\renewcommand\arraystretch{1}
	\footnotesize
	\centerline{
		\begin{tabular}{ccccccccccccc}
			\toprule
			\multirow{2}{*}{Dataset} & \multicolumn{2}{c}{SC}  & \multicolumn{2}{c}{DPC-AHS} & \multicolumn{2}{c}{VCIM} & \multicolumn{2}{c}{LCCV}  & \multicolumn{2}{c}{CNAK}  & \multicolumn{2}{c}{CNMBI} \\
			\cmidrule(lr){2-3}  	\cmidrule(lr){4-5} \cmidrule(lr){6-7} \cmidrule(lr){8-9} \cmidrule(lr){10-11} \cmidrule(lr){12-13}
			& NC         & ACC        & NC         & ACC         & NC         & ACC & NC         & ACC & NC         & ACC & NC         & ACC     \\
			\midrule
			Unbalanced    & 2           & 0          & 2           & 0          & 2           & 0          & 2                         & 0          & \textbf{3}  & 0.9        & \textbf{3}  & \textbf{1} \\
			Flame      & 4           & 0          & \textbf{2}  & \textbf{1} & 4           & 0          & 4                         & 0          & 1           & 0          & \textbf{2}  & \textbf{1} \\
			Jain       & \textbf{2}  & 0.5        & \textbf{2}  & \textbf{1} & 3           & 0          & 7                         & 0          & 3           & 0          & \textbf{2}  & \textbf{1} \\
			S2         & \textbf{15} & 0.4        & 13          & 0          & 2           & 0          & 23                        & 0          & \textbf{15} & \textbf{1} & \textbf{15} & \textbf{1} \\
			A1      & \textbf{20} & 0.1        & \textbf{20} & \textbf{1} & 2           & 0          & 4                         & 0          & \textbf{20} & \textbf{1} & \textbf{20} & \textbf{1} \\
			Asymmetric & 2           & 0          & \textbf{5}  & \textbf{1} & 2           & 0          & \textbf{5}                & \textbf{1} & \textbf{5}  & 0.5        & \textbf{5}  & \textbf{1} \\
			Heterogeneous       & 2           & 0          & 2           & 0          & 5           & 0          & 4                         & 0          & 2           & 0          & \textbf{3}  & \textbf{1} \\
			Multi-objective      & 12          & 0          & 2           & 0          & 3           & 0          & \textbf{4}                & \textbf{1} & 3           & 0          & \textbf{4}  & \textbf{1} \\	
			REUTERS-10K             & 78          & 0            & \textbf{4}  & \textbf{1} & 2          & 0            & 92          & 0          & 1           & 0            & \textbf{4}                                   &  0.1            \\
			Covid-19            & 2           & 0            & \textbf{3}  & \textbf{1} & \textbf{3} & \textbf{1}   & 2           & 0          & \textbf{3}  & \textbf{1}   & \textbf{3}                          & \textbf{1}   \\
			RNA-seq             & \textbf{5}  & 0.6 & \textbf{5}  & \textbf{1} & \textbf{5} & \textbf{1}   & \textbf{5}  & \textbf{1} & \textbf{5}  & 0.6 & \textbf{5}                          & \textbf{1}   \\
			MNIST               & \textbf{10} & 0.1 & 8           & 0          & 3          & 0            & \textbf{10} & \textbf{1} & \textbf{10} & 0.1 & \textbf{10}                         & \textbf{1}   \\
			Pendigits           & 15          & 0            & 13          & 0          & 2          & 0            & 13          & 0          & 11          & 0            & \textbf{10}                         & 0.8 \\
			Optical  & \textbf{10} & 0.7 & 5           & 0          & 7          & 0            & \textbf{10} & \textbf{1} & \textbf{10} & \textbf{1}   & \textbf{10}                         & \textbf{1}   \\
			Pointing Data       & 14          & 0            & 25          & 0          & 3          & 0            & 24          & 0          & 9           & 0            & \textbf{15} & \textbf{1}	 \\
			STL-10       & 20          & 0            & 1          & 0          & -          & -            & -          & -          & 1           & 0            & \textbf{10} & 0.2	 \\
			CIFAR-10       & 1          & 0            & 1          & 0          & -          & -            & 56          & 0          & \textbf{10}           & 0.1            & \textbf{10} & 0.3	 \\
			\bottomrule 
		\end{tabular}
	}
\end{table}

\vspace{-1cm}
\begin{figure}[H]
	\centerline{
		\subfloat[S2]	
		{\includegraphics[width=0.25\linewidth]{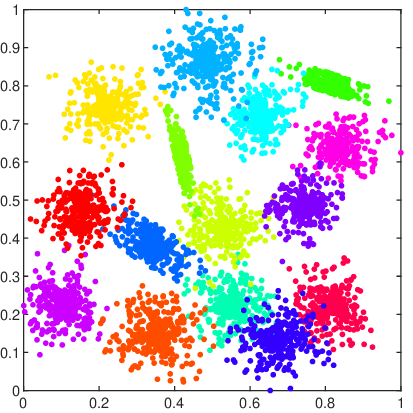}\label{S2}}
		\subfloat[A1]
		{\includegraphics[width=0.25\linewidth]{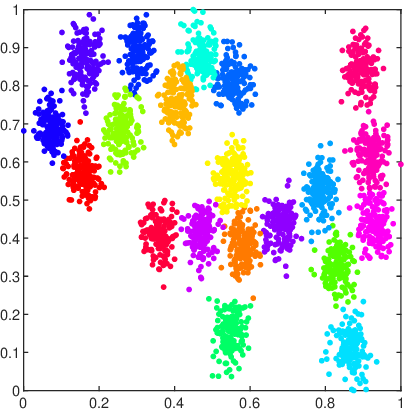}\label{A1}} 
		\subfloat[Unbalanced]	
		{\includegraphics[width=0.25\linewidth]{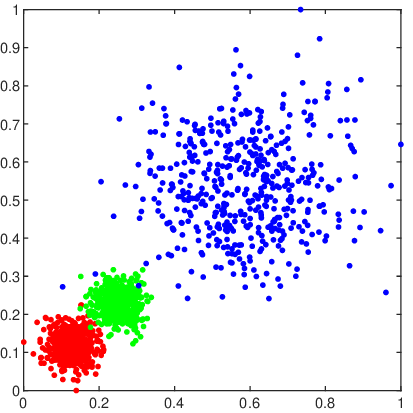}\label{Unbalanced}}	
		\subfloat[Flame]
		{\includegraphics[width=0.25\linewidth]{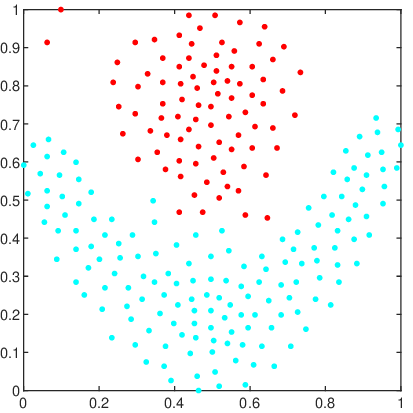}\label{Flame}}} 
	\vspace{-4mm}
	\centerline{
		\subfloat[Jain]
		{\includegraphics[width=0.25\linewidth]{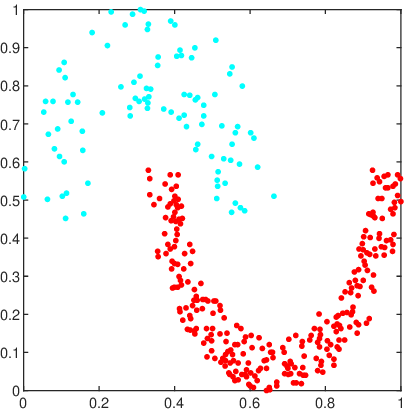}\label{Jain}}
		\subfloat[Asymmetric]
		{\includegraphics[width=0.25\linewidth]{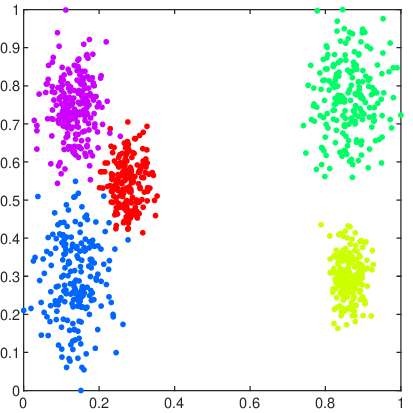}\label{Asymmetric}} 
		\subfloat[Heterogeneous]	
		{\includegraphics[width=0.25\linewidth]{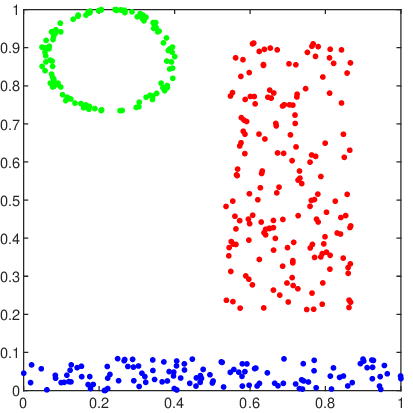}\label{Heterogeneous}} 
		\subfloat[Multi-objective]
		{\includegraphics[width=0.25\linewidth]{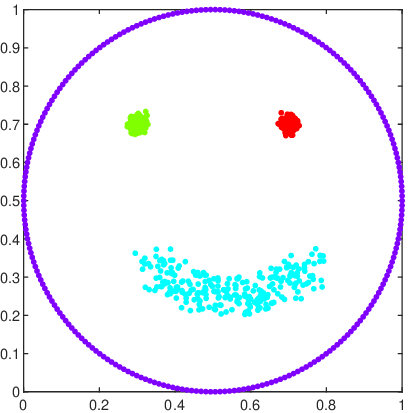}\label{Multi-objective}}}
	\setlength{\abovecaptionskip}{0.5mm} 
	\caption{Synthetic dataset containing arbitrary-shaped clusters.} 
	\label{Fig3-Synthetic}
\end{figure}


\begin{figure}[htp]
	\centering
	\subfloat[MNIST]		
	{\includegraphics[width=0.33\linewidth]{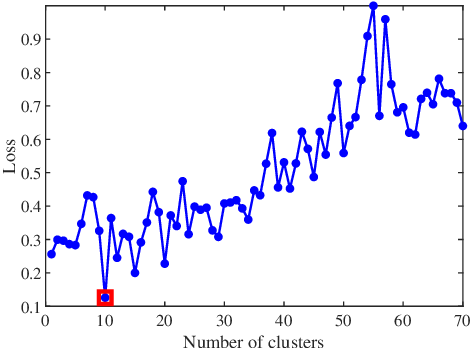}}
	\subfloat[STL-10]
	{\includegraphics[width=0.33\linewidth]{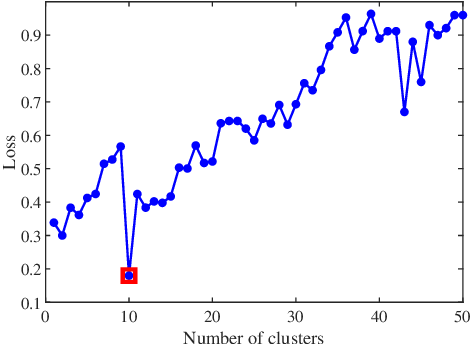}}
	\subfloat[Pointing Data]
	{\includegraphics[width=0.33\linewidth]{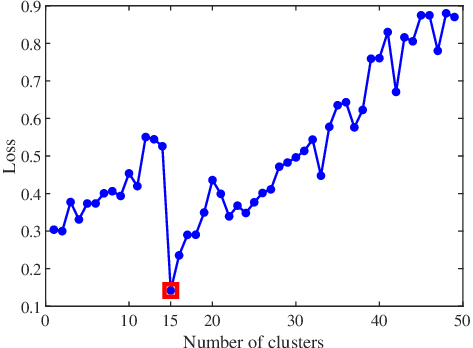}}
	\vspace{-2mm}
	\caption{Plots of $k$ and center pairwise loss on partial image data sets. }
	\label{Fig-Loss}
\end{figure}

\vspace{-0.35cm}
\begin{table}
	\caption{Ablation study of CNMBI and its
		degradation methods. $S_{\mu}$, $S_{\gamma}$ and ${{\varphi}}$ denote the Mean center, Denisty center, and boundary filtering, respectively. }
	\label{Table4}
	\renewcommand\tabcolsep{9pt}
	\renewcommand\arraystretch{0.9}
	  \footnotesize
	\centering
	\begin{tabular}{cccccc}
		\toprule
		Methods & $S_{\mu}$ & $S_{\gamma}$ & ${{\varphi}}$   & MNIST   & Pointing Data    \\
		\midrule
		Ours & $\checkmark$ & $\checkmark$ & $\checkmark$ & NC: 10 \; ACC: 1	& NC: 15 \; ACC: 1\\
		\midrule
		Method-1 & $\checkmark$ & $\checkmark$ & $\times$ & NC: 10 \; ACC: 0.6 & NC: 15 \; ACC: 0.2\\
		\midrule
		Method-2 & $\checkmark$ & $\times$ & $\times$& NC: 1 \; ACC: 0	& NC: 1 \; ACC: 0\\
		\midrule
		Method-3 & $\times$ & $\checkmark$ & $\times$& NC: 10 \; ACC: 0.3	& NC: 15 \; ACC: 0.1\\
		\midrule
		Method-4 & $\checkmark$ & $\times$ & $\checkmark$ & NC: 1 \; ACC: 0	& NC: 1 \; ACC: 0\\
		\midrule
		Method-5 & $\times$ & $\checkmark$ & $\checkmark$ & NC: 10 \; ACC: 0.5	& NC: 15 \; ACC: 0.4\\
		
		\bottomrule
	\end{tabular}
\end{table}

\vspace{-0.35cm}  

\subsection{Parameter Sensitivity}
CNMBI is parameter-free since the parameters, cutoff distance $dc$ in density center and proportion $\lambda$ in boundary filtering, are dynamically adaptive. Here we test CNMBI in three typical scenarios, each with four datasets, as reported in \cref{Fig-Robustness}. For scenario "Noise," different levels of noise interference are added. The "Density" consists of 8 clusters with unbalanced density. For "Number," the four data sets contained have different numbers of clusters, 5,10,20,40.

The results are concentrated in \cref{Fig-Robustness}(m). Concretely, as given by the red line in \cref{Fig-Robustness}, it can be found that from the first to the fourth dataset, CNMBI behaves satisfactorily without error despite the increase in the number of clusters. The Noise-40 and Noise-50 dataset (\cref{Fig-Robustness}(g),(h)) cannot be visually separated due to intense noise disturbance, CNMBI can still determine the desired number of clusters 2 (see the blue-green line segment in \cref{Fig-Robustness}). Moreover, the density imbalance gradually increases from Density-1 to Density-4, and the green line segment in \cref{Fig-Robustness} shows that CNMBI is robust to density changes. 
Generally, benefiting from the consideration of active boundary filtering and simple but effective cluster number features generated by center matching, CNMBI has better robustness against data with complex distribution than other methods.

\subsection{Boundary Pattern Recognition}
Our method can extract valuable boundary pattern information while determining the number of clusters. Although some points cause undesirable results in cluster number determination, in practice, these objects often possess the characteristics of two or more groups simultaneously and have great potential value, such as populations carrying viruses but not manifested in epidemiology and scribbled characters in optical character recognition.
\vspace{-4mm}
\begin{figure}[htp]
	\centering
	\subfloat[Density-1]
	{\includegraphics[width=0.19\linewidth]{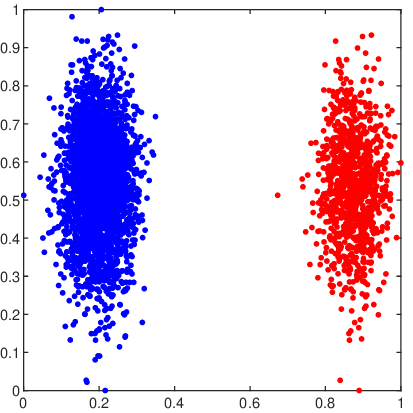}}
	\subfloat[Density-2]
	{\includegraphics[width=0.19\linewidth]{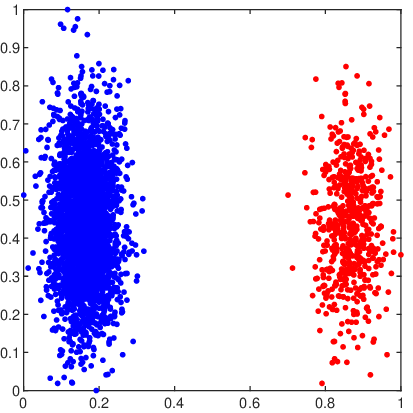}}
	\subfloat[Density-3]
	{\includegraphics[width=0.19\linewidth]{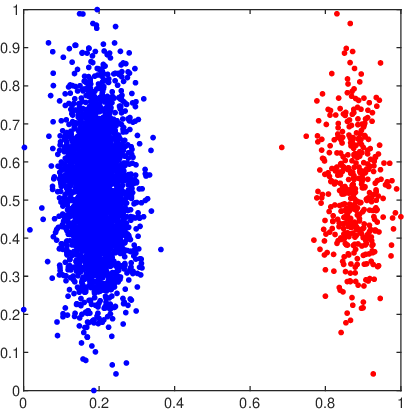}}	
	\subfloat[Density-4]
	{\includegraphics[width=0.19\linewidth]{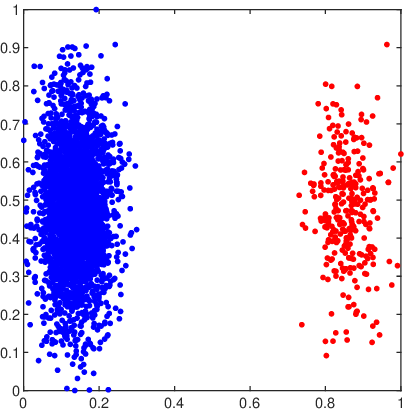}}
	\vspace{-4mm}
	\subfloat[Noise-20]
	{\includegraphics[width=0.19\linewidth]{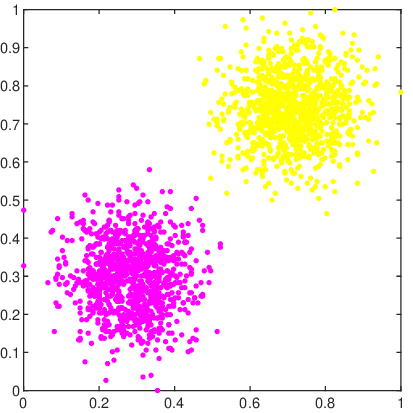}}
	\subfloat[Noise-30]
	{\includegraphics[width=0.19\linewidth]{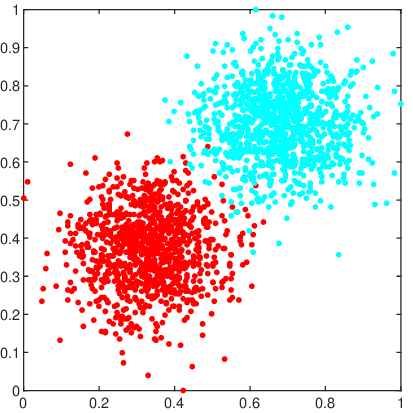}}
	\subfloat[Noise-40]
	{\includegraphics[width=0.19\linewidth]{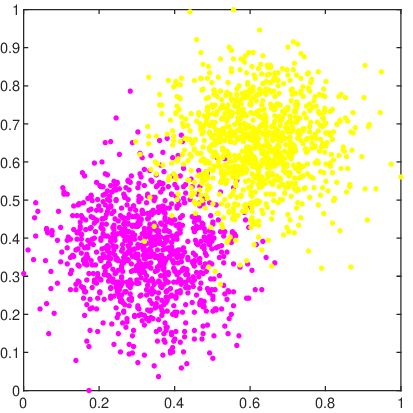}}
	\subfloat[Noise-50]	
	{\includegraphics[width=0.19\linewidth]{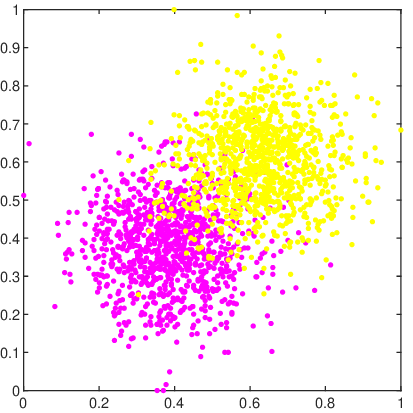}}
	\vspace{-4mm}
	\subfloat[Num-5]
	{\includegraphics[width=0.19\linewidth]{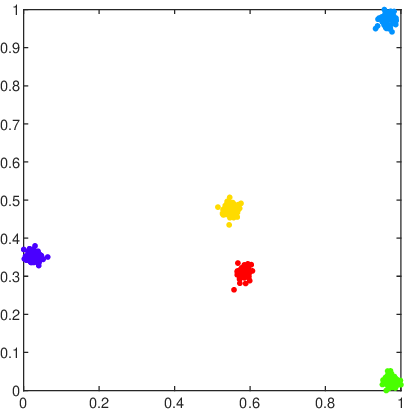}}
	\subfloat[Num-10]
	{\includegraphics[width=0.19\linewidth]{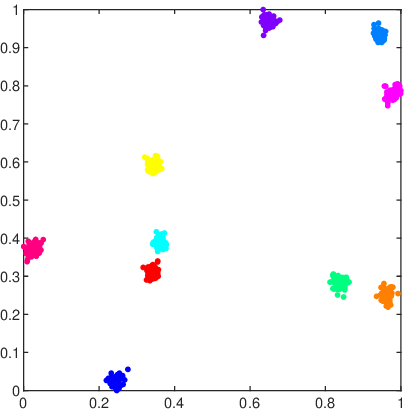}}
	\subfloat[Num-20]
	{\includegraphics[width=0.19\linewidth]{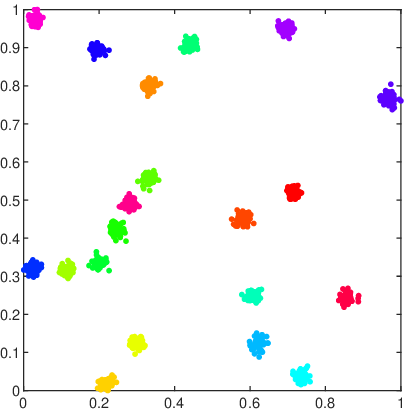}}
	\subfloat[Num-40]	
	{\includegraphics[width=0.19\linewidth]{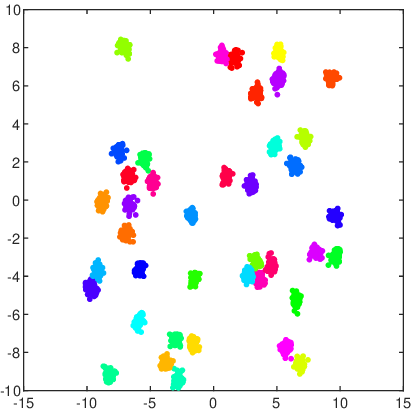}}	\\
	{\includegraphics[width=0.8\linewidth]{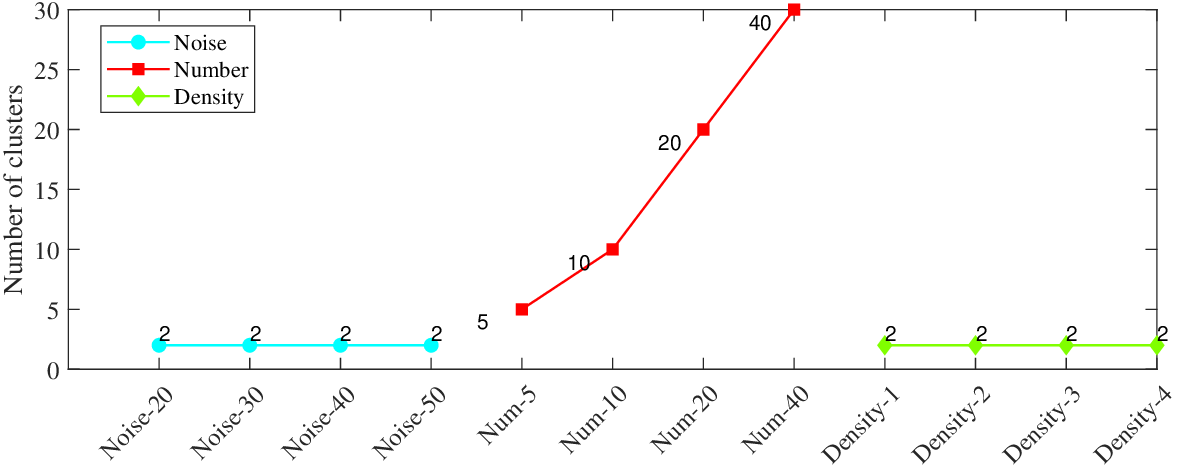}}\\			
	\setlength{\abovecaptionskip}{-0.1mm} 	
	\caption{Three extreme scenarios in Robustness }
	\label{Fig-Robustness}
\end{figure}
\vspace{-4mm}

\cref{Fig-Boundary}(a) shows the boundary samples we extracted from the handwriting dataset (MNIST), from which it can be seen that the characters are illegible, with blurred, hollow, continuous strokes that are distinctly different from normal fonts. The pointing dataset records 23 types of head posture images from 15 volunteers. As shown in \cref{Fig-Boundary}(b), the identified images have exaggerated movements and complex expressions from Pointing Data, which is beneficial for determining the amplitude threshold of each action in the pose recognition. Moreover, our proposed boundary filtering method is informative for other learning tasks.
\vspace{-7mm}
\begin{figure}[htp]
	\centering
	\subfloat[Illegible handwritten characters.]	
	{\includegraphics[width=0.43\linewidth]{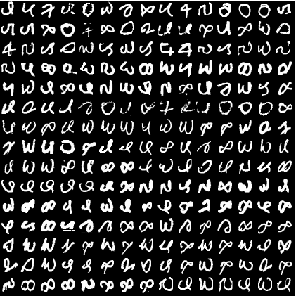}}
	\subfloat[Complex gestures.]	
	{\includegraphics[width=0.43\linewidth]{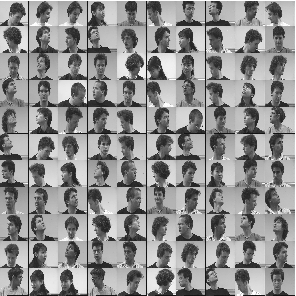}}
	\vspace{-1mm}
	\caption{Boundary pattern information extraction.}
	\label{Fig-Boundary}
\end{figure}
\vspace{-7mm}



\section{Discussion}
\label{sec:discussion}

In this section, we provide a deeper introspection into the structural mechanics of CNMBI and explore its conceptual evolution towards broader learning paradigms.

\subsection{Introspection of CNMBI Mechanics}
\label{subsec:introspection}

\noindent \textbf{Breaking Isotropic Assumptions via Dual-Center Coupling.} 
The fundamental limitation of conventional cluster validity indices is their implicit reliance on the assumption of isotropic, convex data distributions. Most methods evaluate the "goodness" of a partition by measuring inter-cluster separation and intra-cluster compactness. However, real-world data manifolds are often non-convex and characterized by varying densities, rendering distance-based compactness uninformative. 

CNMBI moves beyond this by introducing a \textit{Dual-Center Coupling} mechanism. Instead of scoring the final partition, we examine the structural alignment between the \textit{density center} (representing the topological core) and the \textit{mean center} (representing the geometric centroid).

We observe that in a suboptimal clustering state ($k \neq K$), the geometric centroid is easily skewed by partial cluster merging or splitting, while the topological core remains relatively stable. The "Center Pairwise Loss" essentially quantifies this structural tension. By seeking the global minimum of this loss, CNMBI effectively identifies the number of clusters where the geometric and topological perspectives reach a state of equilibrium, allowing for a robust determination of $K$ without a priori assumptions about the cluster shape.

\noindent \textbf{Boundary Filtering as Pattern Purification.} 
A critical bottleneck in unsupervised number determination is "centroid drift" caused by low-confidence samples located at the fringes of clusters or within overlapping regions. In high-dimensional spaces, these boundary points act as "noise" that blurs the distinctiveness of clusters. 

CNMBI addresses this via a boundary filtering module grounded in \textit{Space Vector Decomposition}. Unlike standard outlier detection, which often focuses on global distance, our approach evaluates the \textit{positional belief} of a sample based on the asymmetry of its local neighborhood vector. By projecting these vectors onto an orthogonal basis, we can identify samples that exist in high-entropy transition zones. Filtering these samples is not merely a denoising step but a process of \textbf{Pattern Purification}. By isolating the "cluster skeleton," we ensure that the subsequent center matching is conducted on the most stable and representative core of the data, significantly enhancing the precision of the loss function.

\noindent \textbf{Metric Generalization and Manifold Scalability.} 
From a structural standpoint, the pairwise matching framework in CNMBI is not strictly bound to Euclidean geometry. The algorithm's architecture suggests a generalized framework where the similarity between the dual centers can be redefined through diverse metric spaces:
\begin{itemize}
    \item \textbf{Manifold Consistency:} The matching process can be viewed as an attempt to find the point where the local manifold density peak aligns with the global cluster mass. This suggests that the framework is naturally scalable to non-linear manifolds where traditional $L_2$ norms may fail.
    \item \textbf{Metric Flexibility:} The "Center Pairwise Loss" serves as a meta-objective that can incorporate different distance or similarity measures depending on the data modality. By focusing on the \textit{behavioral behavior} of centers rather than the \textit{assignment of every single point}, CNMBI provides a computationally efficient and theoretically flexible pathway for identifying structures in ultra-high-dimensional or distorted feature spaces.
\end{itemize}

\subsection{Evolution Toward Open-World Discovery and Cross-Domain Applications}
\label{subsec:openworld_future}

\noindent \textbf{Paving the Way for Open-World Perception.} 
The robust estimation of cluster counts is a transformative cornerstone for transitioning from static clustering to the dynamic Open-World paradigm. A major bottleneck in Generalized Category Discovery \cite{zheng2024textual} and On-the-fly Category Discovery \cite{zheng2024prototypical} is the unrealistic assumption that the number of novel or fine-grained categories is known a priori. By synergizing CNMBI with Transformer-based global dependency modeling \cite{zhang2023tdec}, curriculum-driven density core assignments \cite{zheng2024deep}, and multi-modality co-teaching \cite{zheng2024textual}, future architectures can achieve a more holistic and autonomous understanding of evolving data streams without human-in-the-loop parameter tuning. Beyond discovery, this autonomous capability extends naturally to data-efficient learning and generative modeling. By dynamically capturing the underlying number of semantic modes in complex distributions, the proposed logic can provide critical structural guidance for adaptive dataset quantization \cite{LiZDXQ25} and diverse dataset distillation frameworks \cite{libeyond,li2026fixed,li2026efficient}, ensuring that synthetic or compressed proxies preserve the full topological and categorical diversity of the original data. Similarly, recognizing the true scale of latent cluster densities can enhance data-free knowledge distillation by anchoring diverse diffusion-based augmentations \cite{LiZ0XLQ24} to precise semantic centers. Finally, moving towards advanced 3D generative vision, these density-aware structural priors hold significant potential to refine probability density flow matching, enabling more stable and mode-preserving geodesic estimations for novel view synthesis \cite{wang2026geodesicnvs}.

\noindent \textbf{Cross-Domain Potential and Future Trajectories.}
CNMBI effectively addresses the pre-definition limitation of classical clustering algorithms by automating the estimation of cluster counts, thereby serving as a robust initialization engine for diverse downstream pipelines. For medical imaging, it facilitates pathological region partitioning by automatically determining the number of distinct tissue layers or lesion types in OCT images, which is essential for guided denoising and segmentation \cite{li2026cross,li2025retidiff,li2024efficient,li2026garnet}. In 3D vision, CNMBI enhances global solvers and vanishing point estimation \cite{zhao2026advances,zhao2024balf,zhao2023benchmark,liao2025convex} by discovering the true number of geometric directions in unstructured environments, and streamlines instance-level point cloud grouping \cite{qu2024conditional,qu2025end,qu2025robust}. For robotics, it automates the identification of latent physical modes (e.g., estimating the number of slip, rotation, or stable contact states) within tactile or sensor data streams, thereby refining stability analysis and cooperative control \cite{yan2025pandas,ruan2024q,yan2023stability,zhang2025ccma,xu2025sedm,tian2025measuring}. In embodied AI and video-language modeling, CNMBI aids in temporal event partitioning by clustering video segments into an adaptive number of semantic events \cite{li2025lion}, and assists in defining the optimal granularity for action primitives in VLA models \cite{li2025cogvla}, establishing the structured semantic groundwork necessary for deliberate System-2 reasoning capabilities \cite{song2025hume}. Furthermore, it optimizes knowledge distillation by providing the precise cluster scale for location-aware semantic masking \cite{lan2025acam,lan2026clockdistill,lan2024gradient}, and yields aligned structural counts that can effectively bridge semi-supervised domain translation via diffusion models \cite{wang2025ladb}. Finally, CNMBI provides a scalable way to estimate entity counts in open-vocabulary marine or remote sensing segmentation \cite{li2025exploring,li2025stitchfusion,li2025maris,li2025exploring2} and identifies structural group counts in socio-meteorological analysis \cite{liu2026health,shen2025aienhanced,shen2026mftformer}. In this paper, we propose a novel method, CNMBI, to identify the number of clusters, which is capable of being applied to some challenging data such as large-scale images and high-dimensional real-world data. We are the first to report the results on STL-10 and CIFAR-10. We do not rely on the clustering results of the entire data but instead utilize a few but highly representative cluster centers. This purposely straightforward contrast allows our method to be more scalable and flexible in dimension and shape, most importantly, sufficient for discovering actual cluster numbers through the positional behavior of the cluster center. Additionally, the principle of filtering low-confidence samples further enhances the robustness of our method, which is the first attempt in this field.

\subsubsection{Acknowledgements} This work was supported in part by the Shenzhen Fundamental Research Fund (JCYJ20210324132212030) and the Guangdong Provincial Key Laboratory of Novel Security Intelligence Technologies (2022B1212010005).

%
%
%
\bibliographystyle{splncs04}
\bibliography{mybibfile} 
%
%
%
%
%
\end{document}